\newcommand{\argmax}{\operatornamewithlimits{argmax}}
\newcommand{\argmin}{\operatornamewithlimits{argmin}}
\begin{document}

\title*{Active Classification: Theory and Application to Underwater Inspection}
\author{Geoffrey A. Hollinger, Urbashi Mitra, and Gaurav S. Sukhatme}
\institute{Geoffrey A. Hollinger and Gaurav S. Sukhatme \at Computer Science Department, Viterbi School of Engineering, University of Southern California, Los Angeles, CA 90089, \email{{gahollin,gaurav}@usc.edu}
\and Urbashi Mitra \at Electrical Engineering Department, Viterbi School of Engineering, University of Southern California, Los Angeles, CA 90089, \email{ubli@usc.edu}}
%
%
\maketitle

\abstract{We discuss the problem in which an autonomous vehicle must classify an object based on multiple views.  We focus on the active classification setting, where the vehicle controls which views to select to best perform the classification. The problem is formulated as an extension to Bayesian active learning, and we show connections to recent theoretical guarantees in this area. We formally analyze the benefit of acting adaptively as new information becomes available. The analysis leads to a probabilistic algorithm for determining the best views to observe based on information theoretic costs. We validate our approach in two ways, both related to underwater inspection: 3D polyhedra recognition in synthetic depth maps and ship hull inspection with imaging sonar.  These tasks encompass both the planning and recognition aspects of the active classification problem. The results demonstrate that actively planning for informative views can reduce the number of necessary views by up to 80\% when compared to passive methods.}

\section{Introduction}
\label{sect:intro}

Consider the following scenario, which occurs when observing an environment with an underwater vehicle: given a playback of imaging sonar data from the vehicle, the task is to determine which frames contain objects of interest (e.g., mines~\citep{williams09TIP}, explosives, ship wreckage, enemy submarines, marine life~\citep{steinberg10IROS}, etc.). We will refer to these problems as \textit{underwater inspection}, since an object is being inspected to determine its nature. We are interested in utilizing sensor data, such as depth map information, to determine the nature of a potential object of interest. Such problems are typically formulated as \textit{passive} classification, where some data are given, and the goal is to determine the nature of this data. 

While passive classification problems are challenging in themselves, what is often overlooked is that robotic applications allow for \textit{active} decision making. In other words, an autonomous vehicle performing a classification task has control over how it views the environment. The vehicle could change its position, modify parameters on its sensor, or even manipulate the environment to improve its view. For instance, it may be difficult to determine the nature of an object when viewed from the top (due to lack of training data, lack of salient features, occlusions, etc.), but the same object may be easy to identify when viewed from the side. As an example, Figure~\ref{fig:mine} shows an explosive device placed on a ship's hull viewed from two different angles with imaging sonar. The explosive is easier to identify when viewed from the side (left image) versus from above (right image) due to the reflective qualities of its material.

\begin{figure}[tb]
  \centering
  \subfigure{\includegraphics[width=1.4in]{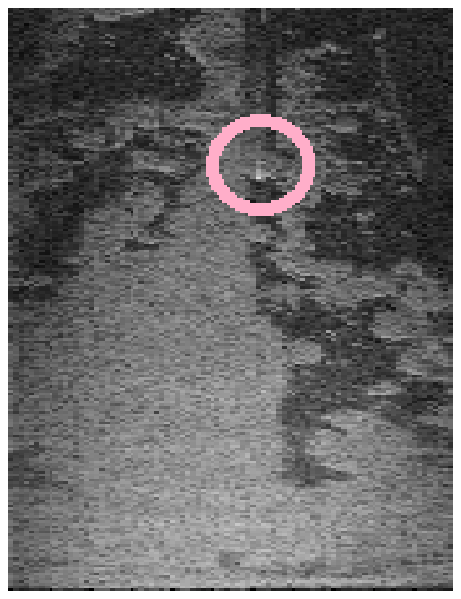}}
  \subfigure{\includegraphics[width=1.4in]{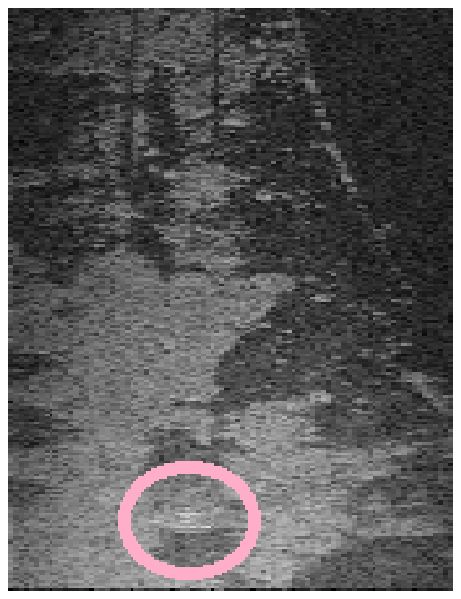}}
  \caption{An explosive device (circled) placed on a ship hull viewed using an imaging sonar. The explosive is easier to identify when viewed from the side (left image) than when viewed from above (right image). This difference motivates active planning to identify the object.}
  \label{fig:mine}
\end{figure}

In addition to choosing the most informative views of the object, an autonomous vehicle is able to act adaptively by modifying its plan as new information from viewing the object becomes available. Consider an object of interest, such as an explosive, that has an identifiable feature on a particular side. If the vehicle receives a view that increases the likelihood of that object being in the frame, it would be advantageous to search for that identifiable feature to either exclude or confirm the identification of that object. A significant benefit from acting adaptively has been shown in the stochastic optimization and planning domains~\cite{golovin10COLT,dean08MOR}.

In this paper, we apply the above insights to active inspection in the underwater domain. This paper makes three main contributions. We
\begin{enumerate}

\item {\bf formalize} the active classification problem, combining classical work in sequential hypothesis testing with recent work in active learning,

\item {\bf analyze} the benefit of adaptivity, leading to an information theoretic heuristic for planning informative paths for active classification, and

\item {\bf apply and test} the approach to underwater classification in a simulated domain and using real-world data.
\end{enumerate}


\section{Related Work}
\label{sect:rel}

The problem of active classification is closely related to the classical problem of sequential hypothesis testing, where a sequence of noisy experiments are used to determine the nature of an unknown~\citep{wald45AMS}. This early work focussed on determining when to discontinue testing and make a final decision on the hypothesis. In classical sequential hypothesis testing, one performs a single experiment until the Bayes' risk is below a threshold. A key distinction between sequential hypothesis testing and active classification is that the type of experiment does not change in sequential testing. One of the first applications of sequential hypothesis testing to sensor placement applications was due to \citet{cameron90IJRR}. They discuss a Bayesian selection framework for identifying 2D images with multiple sensor placements. This work provides a foundation for the formulation discussed in the current paper, though it is limited to 2D images and does not discuss the use of salient features to determine informativeness.

The active classification problem can be seen as an instance of informative path planning~\citep{singh09JAIR}. Informative path planning optimizes the path of a robot to gain the maximal amount of information relative to some performance metric. It has been shown in several domains, including sensor placement~\citep{krause05UAI} and target search~\citep{hollinger09IJRR}, that many relevant metrics of informativeness satisfy the theoretical property of \textit{submodularity}. Submodularity is a rigorous characterization of the intuitive notion of diminishing returns that arises in many active planning application.

Recent advances in active learning have extended the property of submodularity to cases where the plan can be changed as new information is incorporated. The property of \textit{adaptive submodularity} was introduced by \citet{golovin10COLT}, which provides performance guarantees in many domains that require adaptive decision making. Their recent work examines these theoretical properties in the context of a sequential hypothesis testing problem with noisy observations~\citep{golovin10NIPS}. The idea of acting adaptively has also been examined in stochastic optimization and shown to provide increases in performance for stochastic covering, knapsack~\citep{dean08MOR}, and signal detection~\citep{naghshvar10ISIT}. To our knowledge these ideas have not been formally applied to robotics applications.


In the underwater inspection and surveying domains, there has been significant work in applying learning techniques to determine the nature of a marine environment. For example, \citet{steinberg10IROS} utilize Gaussian Mixture Models to classify marine habitats. They explain the need for adaptive classification, and the learning methods they develop help to facilitate that goal. In addition, there has been limited work in utilizing multiple views to classify underwater mines. In some work, an assumption is made that all views provide the same amount of information~\citep{williams09TIP}, and in other work the focus is on designing high-level mission planning capabilities to ensure coverage of the sea floor~\citep{williams10ICRA}. To our knowledge, the problem of determining a path that maximizes classification accuracy based on viewpoints of differing informativeness has not been studied in the underwater inspection domain.

The problem of active multi-view recognition has been studied extensively for computer vision applications~\citep{sipe02PAMI,denzler02PAMI,schiele98ICCV}, including the use of depth maps in medical imagery~\citep{zhou03ICCV}. \citet{ma10ICRA} also provide a recent application of active planning for simultaneous pose estimation and recognition of a moving object using a mobile robot. While different forms of information gain play a critical role in these prior works, a key distinction in our work is the notion of adaptivity. In active classification problems, selecting the next best observation, or even an initial ordering of informative observations, may not result in overall performance optimization. It is in this regard that we provide new analysis of the benefit of adaptivity and make connections to performance guarantees in submodular optimization and active learning. Our analysis is complementary to prior computer vision work and could potentially be extended to many of these alternative frameworks.

\section{Problem Formulation}
\label{sect:prob}

We will now formulate the active classification problem within the sequential hypothesis testing framework~\citep{wald45AMS}. The goal is to determine the class of an unknown object given a set of $N$ possibilities $\mathcal{H} = \{h_1,\ldots,h_N\}$. Let $H$ be a random variable equal to the true class of the object. In the simplest case, a binary classification task is considered (e.g., $H = h_0$ denotes an object of interest and $H = h_1$ denotes the lack of such an object). We can observe the object from a set of possible locations $\mathcal{L} = \{L_1, \ldots, L_M\}$, where the locations themselves are not informative.\footnote{We formulate the problem for the case of discrete locations. If continuous locations are available, an interpolation function can be used to estimate the informativeness of a location based on the discrete training data (see Section~\ref{sect:res}).} There is a cost of moving from location $L_i$ to location $L_j$, which we denote as $d_{ij}$. In robotics applications, this cost is determined by the kinematics of the vehicle and the dynamics of both the vehicle and environment.

A set of features $\mathcal{F} = \{F_1,\ldots,F_K\}$ is also given that distinguishes between objects. Each feature $F_i$ is a random variable, which may take on some values (e.g., binary, discrete, or continuous). Given one or more template images for each class $n$, we can calculate a function $G(L) : \mathcal{L} \rightarrow \mathcal{F}$ mapping viewing location $L$ to the features for which realizations will be observed from that viewing location. In general, this mapping may be stochastic and dependent on the class. The mapping from location to features is a key characteristic of robotics applications that differentiates our problem from the more common problem where the features can be observed directly~\citep{golovin10NIPS}. Figure~\ref{fig:gmodel} shows a graphical model of the resulting problem.

We assume knowledge of a prior distribution for each class $P(H)$, as well as a conditional probability for each feature given the class $P(F_k \mid H)$. The conditional distribution represents the probability of each feature taking on each of its possible values given the class. These probabilities can be estimated via training data. The features that have been viewed evolve as the robot moves from location to location. At a given time $t$, the robot is at location $L(t)$, and we observe realizations of some new features $\mathcal{F}_t \subset \mathcal{F}$. Let us define $\mathcal{F}_{1:t} \colonequals \cup_{i=1}^t \mathcal{F}_i$ as the features observed up until time $t$. If we assume that the features are conditionally independent given the class, we can calculate a distribution $b(t) = \{b_1, \ldots, b_N\}$ using standard recursive Bayesian inference~\citep{thrunbook}:

\begin{align}
 b(t) & \colonequals P(H \mid \mathcal{F}_{1:t})\\
  &=  \eta \ b(t-1) \prod_{F \in \mathcal{F}_t} P(F \mid H),
\end{align}
where $\eta$ is a normalizing constant.

\begin{figure}[tb]
  \centering
  \includegraphics[width=2.75in]{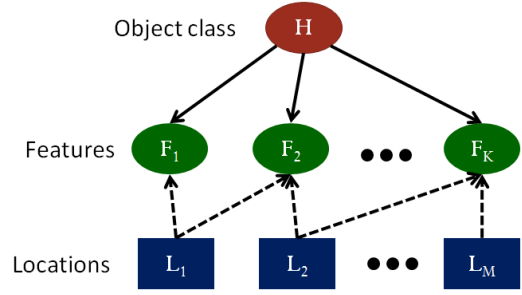}
  \caption{Graphical model of an active classification problem. The goal is to determine the value of the hypothesis $H$ by observing a subset of features $F_1,\ldots,F_K$. The features cannot be viewed directly, but must instead be viewed by moving to some locations $L_1,\ldots,L_M$. The solid lines denote stochastic dependence, and the dashed lines denote which features can be viewed by visiting each location. Dependencies between the features could also exist, which would break the conditional independence assumption.}
  \label{fig:gmodel}
\end{figure}

The goal is to find a policy $\pi$ that takes a belief distribution $b(t)$, current location $L(t)$, and observation history $\mathcal{F}_{1:t}$ and determines the next location from which to view the object. Note that the dependence on the observation history and current distribution allows the policy to be adaptive as new information becomes available.

\subsection{Noiseless Case}

Ideally, we would like to run the policy until we know the object's class. If the observations do not contain any noise, this goal is reachable. For each hypothesis $h$, a policy $\pi$ will have a cost $c(\pi,h)$ associated with the locations the policy visits. We define the expected cost of this policy relative to a distribution on hypothesis $P(H)$ as:

\begin{equation}
 c(\pi) \colonequals \mathbb{E}_H [ c(\pi,h) ]
\end{equation}

This equation represents the expected cost for the policy $\pi$. For the noiseless case, we assume that each hypothesis $h$ has an associated vector $V_h = [f_1, \ldots, f_K]$ of feature values that \textit{always} occur for that hypothesis. As a result, $P(F_1, \ldots, F_K \mid H)$ only takes on the values of one or zero. An incomplete feature vector $V$ is said to be consistent with a hypothesis $h$ if for all $f \in V$ we have $f \in V_h$.

Without observation noise, we may fully determine the hypothesis by observing some features (in some cases all features). Let $\mathcal{V}(V)$ represent the number of classes that are consistent with partial feature vector $V$ (also referred to in prior work as the version space~\cite{golovin10NIPS}). Let $V(\pi,h)$ be the feature vector that results from executing policy $\pi$ with hypothesis $h$. The optimal policy is now the one that optimizes the equation below:

\begin{equation}
 \pi^* = \argmin_{\pi} c(\pi) \mbox{ s.t. } \mathcal{V}(V(\pi,h)) = 1 \mbox{ for all } h \in \mathcal{H}
\end{equation}

Even in the noiseless case, there may be insufficient features to determine the exact class of the unknown object. In these cases, the goal would be to observe the fewest number of features that reduce the number of consistent classes as much as if all features were observed.

\subsection{Noisy Observations}

When the observations are noisy, it will likely be impossible to determine the class of an unknown object with certainty. However, as in the decision theory literature, we minimize the expected loss (also known as the Bayes' risk~\cite{wald45AMS}) of the final classification decision. We will now formulate the problem of minimizing Bayes' risk for the case of noisy observations. With noisy observations, $P(F \mid H)$ takes on values other than one or zero. As a result, there is no longer a deterministic vector $V_{h}$ associated with each hypothesis, and typically we cannot uniquely determine the hypothesis even by observing all features.

In the noisy observation case, we can generate a policy that minimizes a loss function $l(d,h)$ associated with making a decision $d$ for that object (i.e., deciding on the object's class). For instance, if the object is an explosive, a false negative could incur a very high cost, but a false positive would be a lower cost. If we select the class with maximum \textit{a posteriori} probability after running a policy $\pi$, we can calculate the expected loss for running that policy to completion:

\begin{equation}
l(\pi) \colonequals \mathbb{E}_H [l(d,h) \mid \pi]
\end{equation}

Let $\tau$ be an acceptable threshold on expected loss. A natural goal is to incur the lowest cost and achieve the same expected loss. The resulting optimization problem is given below:

\begin{equation}
 \pi^* = \argmin_{\pi} c(\pi) \mbox{ s.t. } l(\pi) \leq \tau
\end{equation}

\section{Proposed Solution}
\label{sect:alg}

The goal is to optimize the expected loss for a policy $\pi$. The expected loss is a function of the final belief $b(T)$, which represents $P(H \mid \mathcal{F}_{1:T})$. Calculating this loss on an infinite horizon would require examining an exponential number of paths in the horizon length. To make the computation feasible, we can use the truncated expected loss:

\begin{equation}
 \pi^* = \argmin_{\pi \in \Pi(1:T)} \mathbb{E}_H [ l(d,h) \mid \pi(1 : T) ]
\label{eq:trunc}
\end{equation}

A related measure of the quality of $b(T)$ is the \textit{information gain} of the class given the features observed $IG(H; \mathcal{F}_{1:T}) = \mathbb{H}(H) - \mathbb{H}(H \mid \mathcal{F}_{1:T})$, where $\mathbb{H}$ is the entropy. We will motivate the use of information gain further in Section~\ref{sect:theory}. A heuristic for solving the active classification problem using information gain can be formulated as below:

\begin{equation}
 \pi^* = \argmax_{\pi \in \Pi(1:T)} \mathbb{E}_H [ IG(H; \mathcal{F}_{1:T}) \mid \pi(1 : T) ]  ,
\label{eq:opt}
\end{equation}
where $\Pi(1:T)$ is the set of all possible policies truncated at time $T$. If this optimization is performed on the receding horizon, it allows for adaptive decision making with a finite lookahead. The path costs can be implicitly incorporated by looking ahead to a ``cost horizon." This approach has been shown to perform well in similar information gathering domains~\citep{hollinger09IJRR}.

For some loss functions, the information gain objective is equivalent to minimizing the Bayes' risk. One such function for the binary hypothesis case is the standard $0/1$ loss, where cost of one is incurred for an incorrect classification, and no cost is incurred for a correct classification.

\section{Theoretical Analysis}
\label{sect:theory}

We now relate the active classification problem to recent advances in active learning theory that allow us to analyze the performance of both non-adaptive and adaptive policies. Active classification falls into a class of informative path planning problems~\citep{singh09JAIR}. Given some potential locations to make observations, the informative path planning problem is to maximize a function $F(A)$, where $A = \{L_1,L_2,\ldots,L_T\}$ is a set of locations visited by the vehicle up to an end time $T$. In most cases, the sets of possible locations to visit are constrained by obstacles, vehicle kinematics, or other factors. For the active classification problem, $F(A)  = -\mathbb{E}_H [l(d,h) \mid A]$, the negative expected loss after observing along path $A$.

\subsection{Performance Guarantees}

A \textit{non-adaptive policy} is one that generates an ordering of locations to view and does not change that ordering as features are observed. The non-adaptive policy will typically be easier to compute and implement, since it can potentially be computed offline and run without modification. Performance guarantees in the non-adaptive informative path planning domain are mainly dependent on the objective function (i.e., the informativeness of the views) being non-decreasing and submodular on the ground set of possible views. A set function is non-decreasing if the objective never decreases by observing more locations in the environment. A set function is submodular if it satisfies the notion of diminishing returns (see \citet{singh09JAIR} for a formal definition).

Information gain has been shown to be both non-decreasing and submodular if the observations are conditionally independent given the class~\citep{krause05UAI}, as is assumed in this paper (see Section~\ref{sect:prob}). Thus, if the loss function is equivalent to information gain (e.g., 0/1 loss with binary hypotheses), then the active classification problem optimizes a non-decreasing, submodular function. Let $A^{IG}$ be the set of locations visited by the information gain heuristic with a one-step lookahead. For non-adaptive policies without path constraints (e.g., when traversal costs between locations are negligible compared to observation cost), we have the following performance guarantee: $F(A^{IG}) \geq (1-1/e) F(A^{opt})$~\citep{krause05UAI}.

When path constraints are considered, the recursive greedy algorithm, a modification of greedy planning that examines all possible middle locations while constructing the path, can be utilized to generate a path $A^{rg}$~\citep{singh09JAIR}. Recursive greedy provides a performance guarantee of $F(A^{rg}) \geq F(A^{opt}) / \log(|A^{opt}|)$, where $|A^{opt}|$ is the number of location visited on the optimal path. However, the recursive greedy algorithm requires pseudo-polynomial computation, which makes it infeasible for some application domains. To our knowledge, the development of a fully polynomial algorithm with performance guarantees in informative path planning domains with path constraints is still an open problem. Hence, we utilize a one-step heuristic in our experiments in Section~\ref{sect:res}.

The performance guarantees described above do not directly apply to adaptive policies. An \textit{adaptive} policy is one that determines the next location to select based on the observations at the previously viewed locations. Rather than a strict ordering of locations, the resulting policy is a tree of locations that branches on the observation history from the past locations. As noted earlier, the concept of adaptive submodularity~\citep{golovin10COLT} allows for some performance guarantees to extend to adaptive policies as well. When the observations are noiseless, the information gain heuristic satisfies the property of adaptive submodularity. This result leads to a performance guarantee on the cost of the one-step information gain adaptive policies in sequential hypothesis testing domains without path constraints: $c(\pi_{IG}) \leq c(\pi^{opt}) (\ln(1/p_{min}) + 1)$, where $p_{min} \colonequals \min_{h \in \mathcal{H}} P(h)$. When noisy observation are considered, a reformulation of the problem is required to provide performance guarantees (i.e., information gain is not adaptive submodular). However, \citet{golovin10NIPS} show that the related Equivalence Class Determination Problem (ECDP) optimizes an adaptive submodular objective function and yields a similar logarithmic performance guarantee. The direct application of ECDP to active classification is left for future work.

\subsection{Benefit of Adaptivity}

We now examine the benefit of adaptive selection of locations in the active classification problem. As described above, the non-adaptive policy will typically be easier to compute and implement, but the adaptive policy could potentially perform better. A natural question is whether we can quantify the amount of benefit to be gained from an adaptive policy for a given problem. To begin our analysis of adaptivity, we consider the problem of minimizing the expected cost of observation subject to a hard constraint on loss\footnote{Note that the related problem of minimizing expected loss subject to a hard constraint on budget is also relevant. While similar examples show that there is a benefit to acting adaptively in this case, we defer detailed analysis to future work.}:

\begin{problem}
Given hypotheses $\mathcal{H} = \{h_1, h_2, \ldots, h_N\}$, features $\mathcal{F} = \{F_1, F_2, \ldots, F_K\}$, locations $\mathcal{L} = \{L_1, \ldots, L_M\}$, costs $c(L_i,L_j) = d_{ij}$ for observing location $i$ when at location $j$, and a loss function defined as $l(d,h)$ for selecting hypothesis $d$ when the true hypothesis is $h$. We wish to select a policy $\pi$ such that:

\begin{equation}
 \pi^* = \argmin_{\pi} c(\pi) \mbox{ s.t. } l(\pi) \leq \tau,
\end{equation}
where $l(\pi) \colonequals \mathbb{E}_H [l(d,h) \mid \pi]$,  $c(\pi) \colonequals \mathbb{E}_H [c(\pi,h)]$, and $\tau$ is a scalar threshold.
\label{prob:loss}
\end{problem}

We now show that the optimal non-adaptive policy can require exponentially higher cost than an adaptive policy for an instance of this problem:

\begin{theorem}
Let $\pi_{adapt}$ be the optimal adaptive policy, and $\pi_{non-adapt}$ be the optimal non-adaptive policy. There is an instance of Problem~\ref{prob:loss} where $c(\pi_{adapt}) = \log(N)$ and $c(\pi_{non-adapt}) = N-1$, where is $N$ is the number of hypotheses.
\label{prop:loss}
\end{theorem}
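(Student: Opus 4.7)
The plan is to exhibit a concrete instance of Problem~\ref{prob:loss} in which the optimal adaptive policy enjoys logarithmic cost while every feasible non-adaptive policy pays linear cost. I would take $N = 2^k$ hypotheses $\mathcal{H} = \{h_1, \ldots, h_N\}$ with a uniform prior, introduce $N-1$ locations $L_1, \ldots, L_{N-1}$ with unit visiting cost (so $d_{ij} = 1$ for every $i, j$), and make the feature at $L_i$ a deterministic binary ``threshold'' test: $F_i = 0$ when $H \in \{h_1, \ldots, h_i\}$ and $F_i = 1$ when $H \in \{h_{i+1}, \ldots, h_N\}$. For the decision problem I would take $0/1$ loss and set $\tau = 0$, so that only policies that pin down $H$ with certainty are feasible.

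For the adaptive upper bound I would describe the obvious binary-search policy: start at $L_{N/2}$, and recursively query the midpoint of the surviving version space. Because observations are noiseless and each query halves the remaining set of consistent hypotheses, the policy always identifies the true hypothesis after exactly $\log_2 N$ observations, so $l(\pi_{adapt}) = 0 \leq \tau$ and $c(\pi_{adapt}) = \log N$.

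The content of the argument is the non-adaptive lower bound. The key observation is that two adjacent hypotheses $h_i$ and $h_{i+1}$ produce identical realizations at every location except $L_i$: at $L_j$ with $j < i$ they both give $F_j = 1$, at $L_j$ with $j > i$ they both give $F_j = 0$, and only $L_i$ separates them. Consequently any fixed location set $S \subseteq \{L_1, \ldots, L_{N-1}\}$ omitting some $L_i$ assigns identical observation sequences to $h_i$ and $h_{i+1}$, so the resulting maximum-a-posteriori rule misclassifies at least one of them, giving $l(\pi) \geq 1/N > 0 = \tau$. Hence every feasible non-adaptive policy must visit all $N-1$ locations, and since each visit costs one unit and no location needs to be visited twice, $c(\pi_{non-adapt}) = N-1$.

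The main obstacle I anticipate is purely bookkeeping: checking that the proposed instance respects all ingredients of Problem~\ref{prob:loss}, in particular specifying the view-to-feature map $G(L_i) = F_i$, the (degenerate) conditional distributions $P(F_i \mid H) \in \{0,1\}$, and verifying the cost accounting so that the values $\log N$ and $N-1$ come out cleanly as expectations under the uniform prior. Once those definitions are in place, the halving argument for the adaptive policy and the adjacent-hypothesis separation argument for the non-adaptive policy yield the claimed exponential gap directly.
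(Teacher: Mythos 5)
Your proposal is correct and follows the same high-level strategy as the paper's proof: a noiseless construction with $\tau=0$, $0/1$ loss, $N-1$ unit-cost features observed directly through locations, a binary-search adaptive policy costing $\log_2 N$, and an argument that a non-adaptive policy must pay for all $N-1$ features. The constructions differ in detail, though. The paper arranges its features hierarchically, like the internal nodes of a balanced binary tree: $F_1$ deterministically splits the hypothesis set in half, $F_2$ and $F_3$ each deterministically split one half into quarters while behaving as an uninformative fair coin ($P(F\mid h)=1/2$) on the other half, and so on down to features separating sibling pairs. You instead use global threshold features $F_i = \mathbf{1}\{H \notin \{h_1,\ldots,h_i\}\}$, every one of which is deterministic under every hypothesis. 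Both support the same adaptive binary search, but your version makes the non-adaptive lower bound genuinely easy and rigorous: adjacent hypotheses $h_i,h_{i+1}$ agree at every location except $L_i$, so omitting any location forces positive expected loss under a full-support prior. The paper simply asserts that the non-adaptive policy ``must select all $N-1$ features'' without an argument (and its index bookkeeping for $F_2,F_3$ contains apparent typos), so your adjacent-pair argument is a real improvement in rigor. Two small points you share with the paper and could tighten: (i) the claim $c(\pi_{adapt})=\log N$ requires a matching lower bound for adaptive policies, which follows from the fact that any zero-loss policy induces a binary decision tree with $N$ leaves and hence expected depth at least $\log_2 N$ under the uniform prior; (ii) the exact value $N-1$ for the non-adaptive cost presumes the non-adaptive policy also fixes its stopping point in advance (otherwise one only gets $\Omega(N)$ in expectation), an assumption implicit in the paper's definition of a non-adaptive policy as a fixed ordering.
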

\begin{proof}
We adopt a proof by construction. Let $\tau = 0$, i.e., the required expected loss is zero. Let the features be observed directly through the corresponding locations (i.e., $G(L_i) = F_i$ and $M = K$). Let there be $N$ hypotheses and $M = N-1$ features. Assign a cost $c(F) = 1$ for all features. The loss $l(d,h) = 1$ for all $d \neq h$ and $l(d,h) = 0$ for $d = h$.

Let $P(h) > 0$ for all $h \in \mathcal{H}$. Let $P(F_1 | h_i) = 1$ for all $i \in \{1,\ldots,N/2\}$ and $P(F_1 | h_i) = 0$ for all $i \in \{N/2+1, N\}$. That is, feature $F_1$ is capable of deterministically differentiating between the first half and second half of the hypotheses. $P(F_2 | h_i) = 1$ for all $i \in \{1, N/4\}$, $P(F_3 | h_i) = 0$ for all $i \in \{N/4+1, N/2\}$, and $P(F_2 | h_i) = 1/2$ for all $i \in \{N/2+1, N\}$. That is, feature $F_2$ is capable of deterministically differentiating between the first fourth and second fourth of the hypothesis space but gives no information about the rest of the hypotheses. Similarly, define $P(F_3 | h_i) = 1$ for all $i \in \{N/2+1, 3N/4\}$, $P(F_3 | h_i) = 0$ for all $i \in \{3N/4+1, N\}$, and $P(F_2 | h_i) = 1/2$ for all $i \in \{1, N/2\}$. The remaining features are defined that differentiate progressively smaller sets of hypotheses until each feature differentiates between two hypotheses.

The adaptive policy will select $F_1$ first. If $F_1$ is realized positive, it will select $F_2$. If $F_1$ is realized negative, it will select $F_3$. It will continue to do a binary search until $\log(N)$ features are selected. The true hypothesis will now be known, resulting in zero expected loss. In contrast, the non-adaptive policy must select all $N-1$ features to ensure realizing the true hypothesis and reducing the expected loss to zero. \qed
\end{proof}

The adaptivity analysis in Theorem~\ref{prop:loss} requires multiple hypotheses, and the potential benefit of adaptivity increases as the number of hypotheses increases. For the two hypothesis case, however, the benefit of adaptivity may be very small. In the binary examples we have examined, all cases showed little or no benefit from adaptivity. Furthermore, if there is a strict ordering on the informativeness of the viewing locations independent of the current distribution on the hypotheses, we conjecture that the benefit of acting adaptively will be zero~\citep{naghshvar10ISIT}.



\section{Implementation and Experiments}
\label{sect:res}

In this section, we examine the active classification problem experimentally through the use of both synthetic images and data from imaging sonar during ship hull inspection. The results confirm the benefit of active view selection in these application domains as well as the benefit of adaptivity when more than two hypotheses are considered. For all experiments, we assume a simple 0/1 loss model, where a cost of one is incurred for a false classification, and a cost of zero is incurred for a correct classification.

\subsection{Synthetic Images}

The goal of our first experiments is to differentiate between possible polyhedra using depth maps from different views. The relevance of polyhedra recognition to underwater inspection is direct, as explosive devices are often cubic or pyramidal in shape~\citep{dobeck02SPIE}. This is a particularly challenging active recognition problem due to similarities and symmetries between polyhedra. These experiments are designed to (1) demonstrate the benefit of selecting the views with the highest potential for information about the unknown object, and (2) examine the benefit of acting adaptively when multiple possible objects are examined.

To identify the polyhedra, we utilize salient features extracted from the synthetic depth map. Training images were created from 24 different viewpoints around the objects, and the OpenCV~\cite{opencv} SURF feature extractor~\cite{bay08CVIU} was used to extract features for the different object and viewpoints viewpoints. Noisy test images were then created with Gaussian white noise ($\sigma = 0.25 \ m$).

\subsubsection{Two objects}

The intuition is that it will be easier to identify the object in some viewpoints than in others, due to the presence of additional salient features. Figure~\ref{fig:surf} shows SURF features and correlations for a tetrahedron and cube viewed from the face and vertex. The number of SURF features and correlations is greater for viewing the vertices when compared to viewing the faces. Particularly for the cube, viewing the face provides few correlations and little information about the object class.

\begin{figure}[bt]
  \centering
  \subfigure{\includegraphics[height=0.7in]{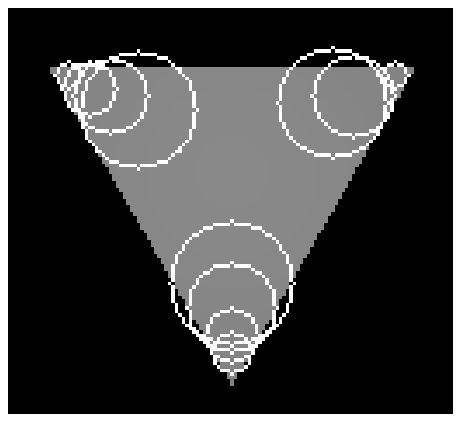}}
  \subfigure{\includegraphics[height=0.7in]{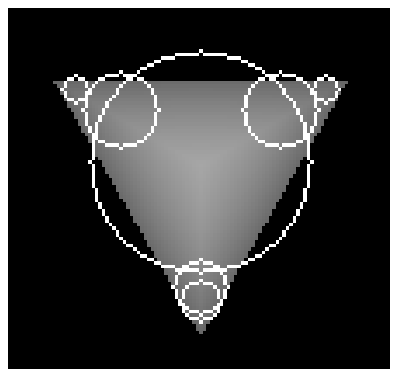}}
  \subfigure{\includegraphics[width=0.7in]{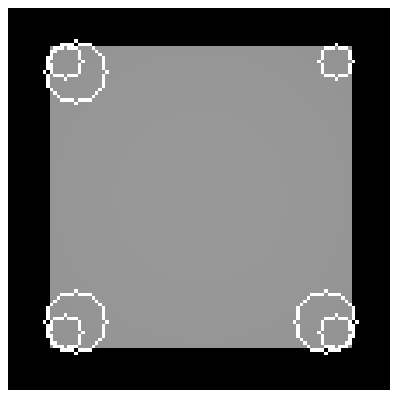}}
  \subfigure{\includegraphics[height=0.7in]{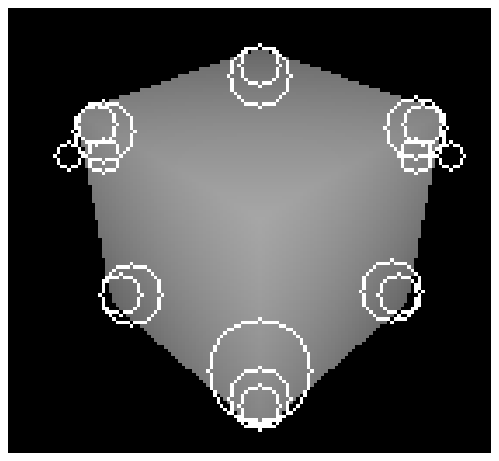}}\\
  \subfigure{\includegraphics[height=1.5in]{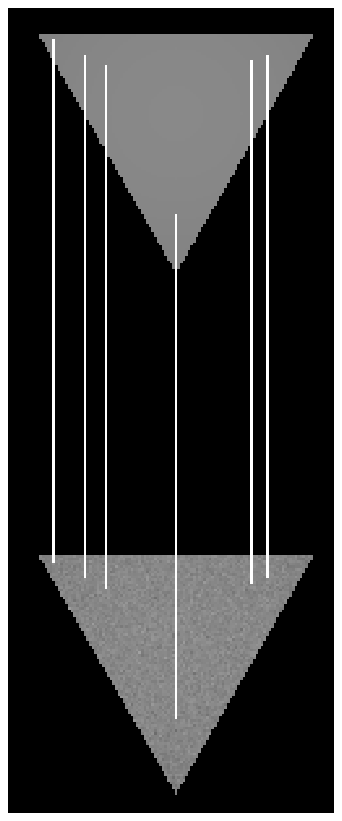}}
  \subfigure{\includegraphics[height=1.5in]{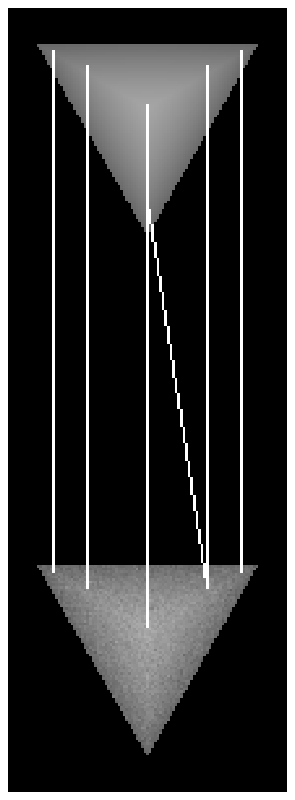}}
  \subfigure{\includegraphics[height=1.5in]{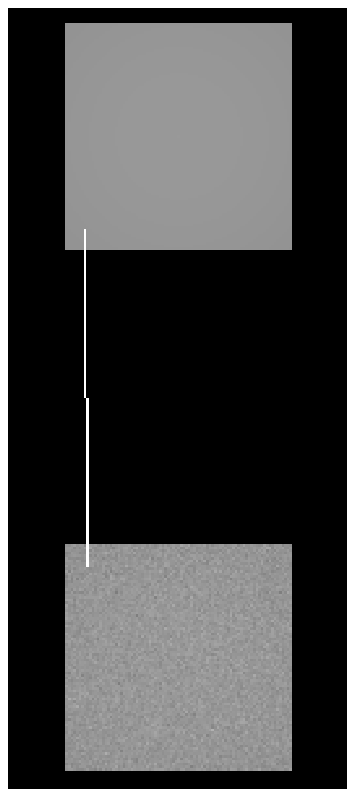}}
  \subfigure{\includegraphics[height=1.5in]{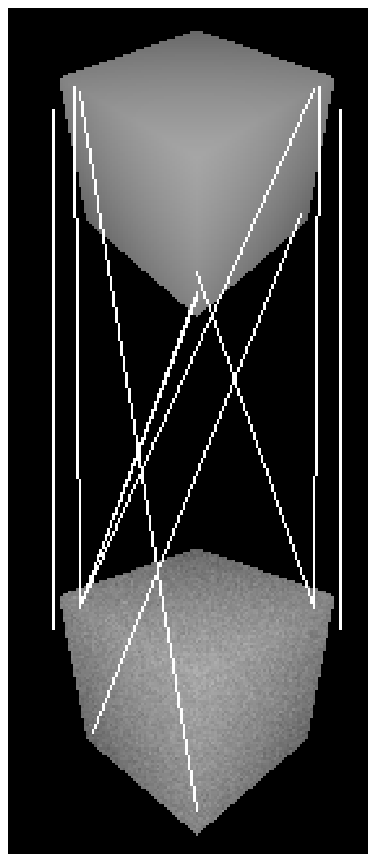}}
  \caption{Top: SURF features extracted from viewing depth maps of tetrahedron and cube faces and vertices. Bottom: SURF feature correlations when compared to noisy depth maps.}
    \label{fig:surf}
\end{figure}

For quantitative analysis, we now compare informative view selection to random view selection on the synthetic depth map data from a cube and tetrahedron. The information gain of each view was calculated based on the number of expected salient features corresponding to the true object minus the expected number of false correspondences. This calculation requires comparing all views to the corresponding views of each other object ($O(N^2)$ computation in the number of hypotheses). After the cross-correlations were computed, planning was completed in milliseconds. To apply adaptive view selection, we calculate the information gain from the current distribution over the features, which changes as new views are observed.

In these experiments, path constraints are ignored, though the view ordering could easily be used to generate a feasible path on the finite horizon. Figure~\ref{fig:synth} shows results comparing the information gain heuristic with random view orderings. Utilizing the information gain heuristic to determine the most informative views leads to as much as a 35\% increase in the number of correct feature correspondences with limited views. Adaptive view selection does not provide much benefit over the non-adaptive technique, as expected from the small adaptivity gap in the binary hypothesis case (see Section~\ref{sect:theory}). Note that, for comparison, only 24 views are considered, and all methods will provide the same performance after seeing all these views.


\begin{figure}[bt]
  \centering
  \subfigure{\includegraphics[width=2.25in]{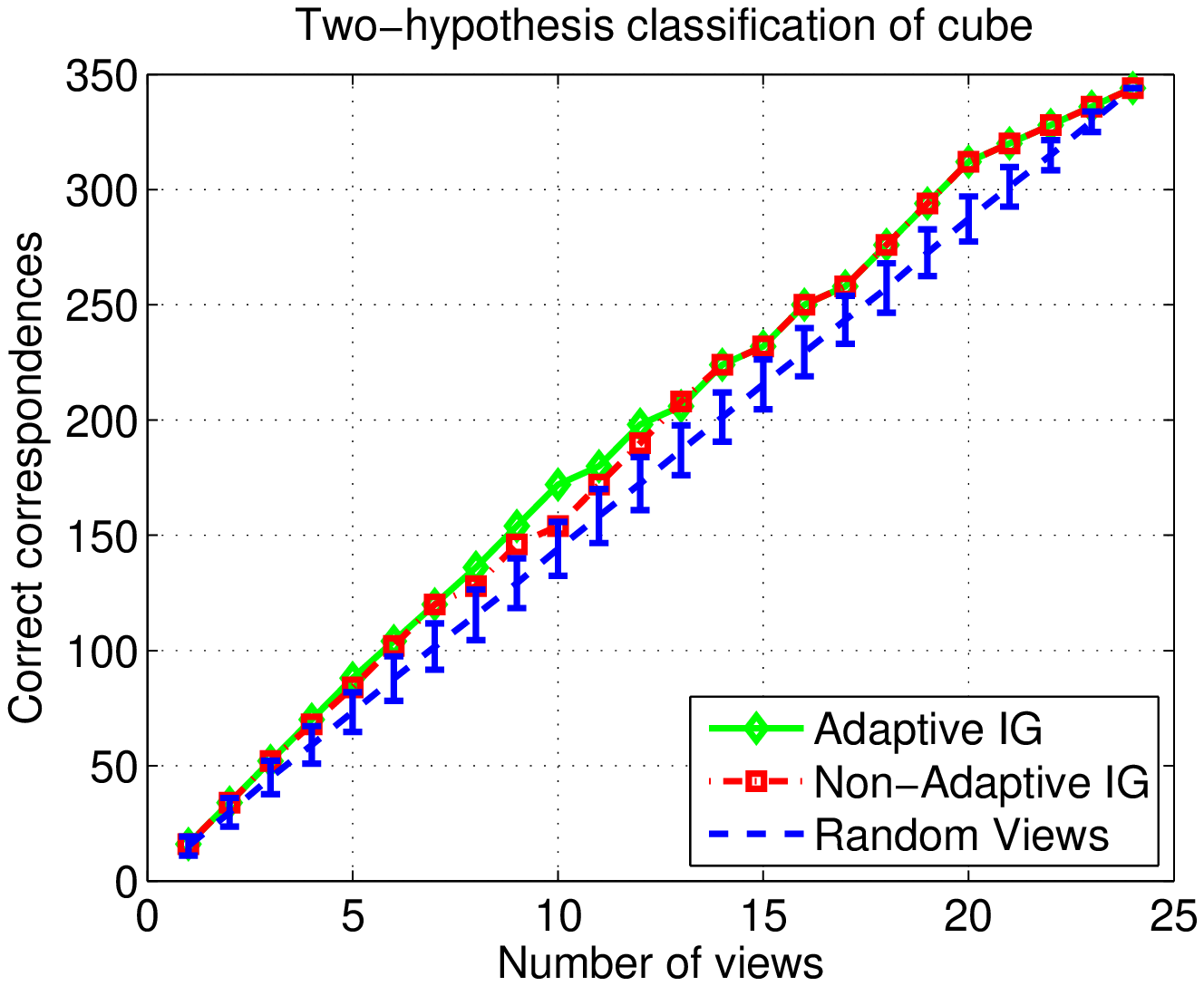}}
  \subfigure{\includegraphics[width=2.25in]{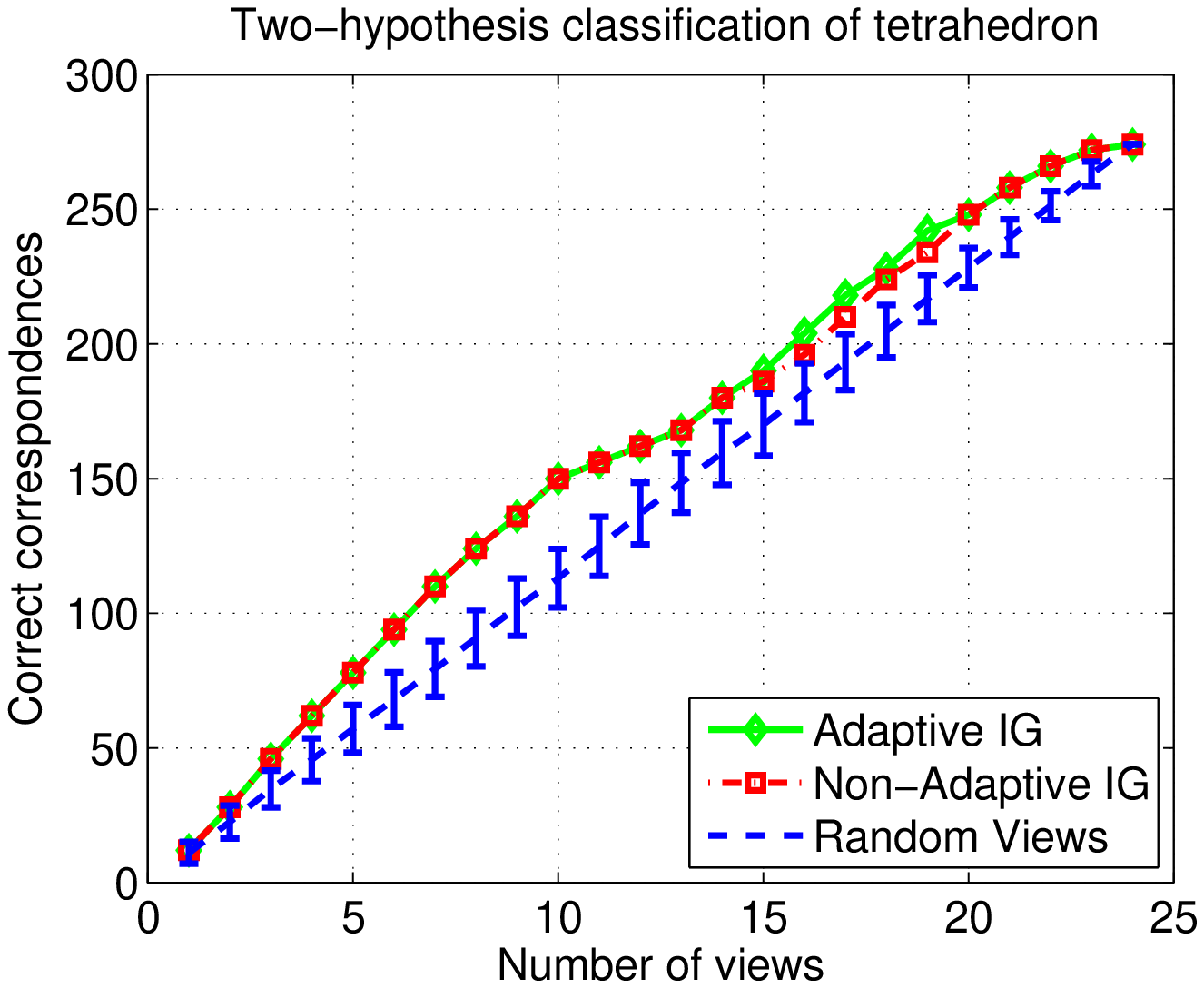}}
  \caption{Multi-view classification experiments with synthetic images of a cube and tetrahedron viewed from 24 different angles (best viewed in color). Utilizing the expected information gain of the next view improves the number of SURF feature correspondences when limited views are used. Random view results are averaged over 100 orderings; error bars are one standard deviation.}
  \label{fig:synth}
\end{figure}


\subsubsection{Multiple objects}

The benefit of active classification is now examined for cases where more than two object classes are considered. In addition to the cube and tetrahedron, we include training images of the icosahedron, octahedron, and dodecahedron as possible object classes. The theoretical analysis in Section~\ref{sect:theory} suggests that acting adaptively should improve performance for the multi-hypothesis problem. Figure~\ref{fig:multisynth} shows results for classifying the cube and tetrahedron when additional hypotheses are considered for the other three platonic solids. The adaptive policy outperforms both random view selection and the non-adaptive policy the majority of the time. The difference is particularly significant for the tetrahedron. Note that the dominance of the adaptive policy is not true at all data points. These results suggest that adding additional hypotheses in some cases reduces the performance of active view selection.


\begin{figure}[bt]
  \centering
  \subfigure{\includegraphics[width=2.25in]{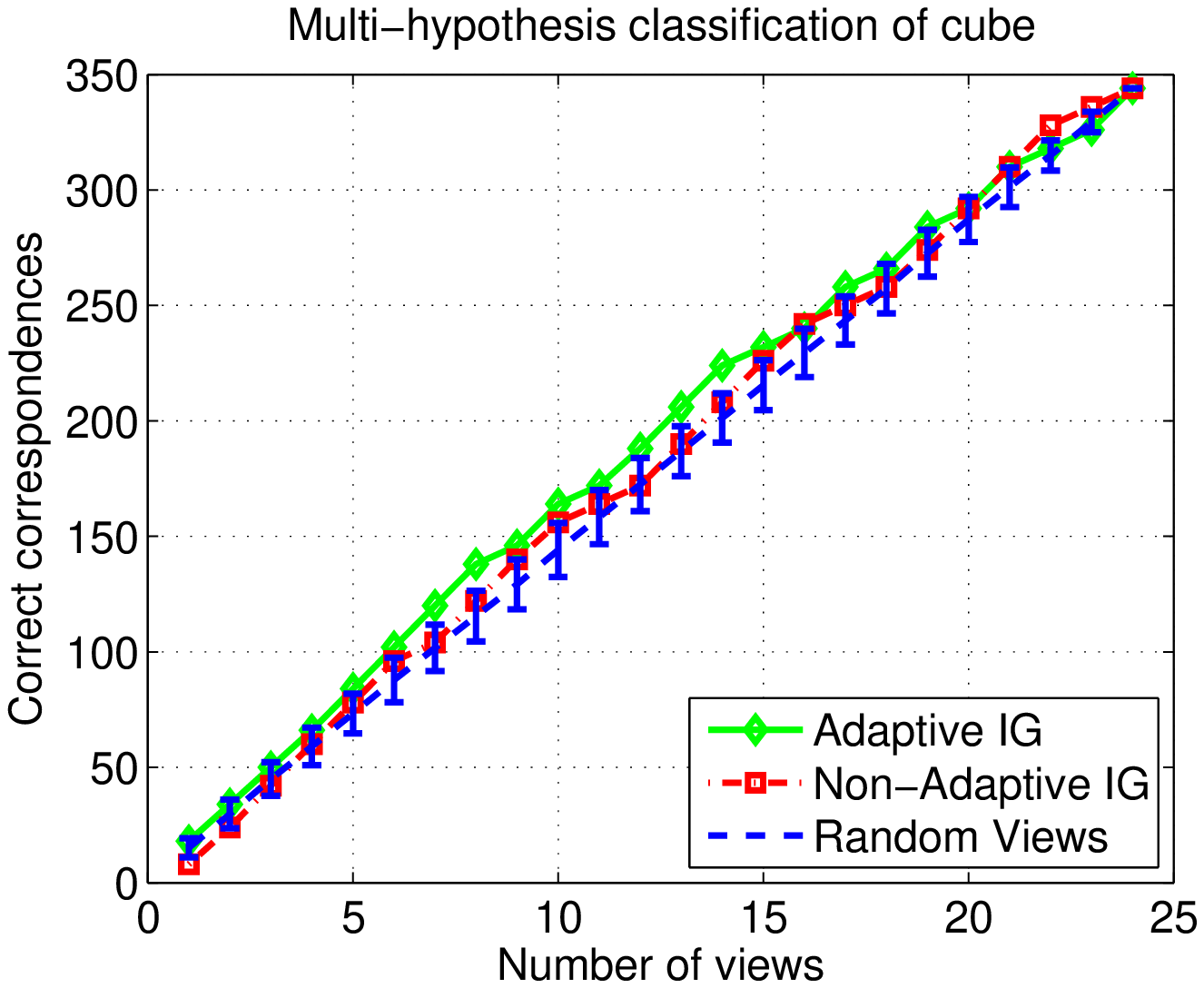}}
  \subfigure{\includegraphics[width=2.25in]{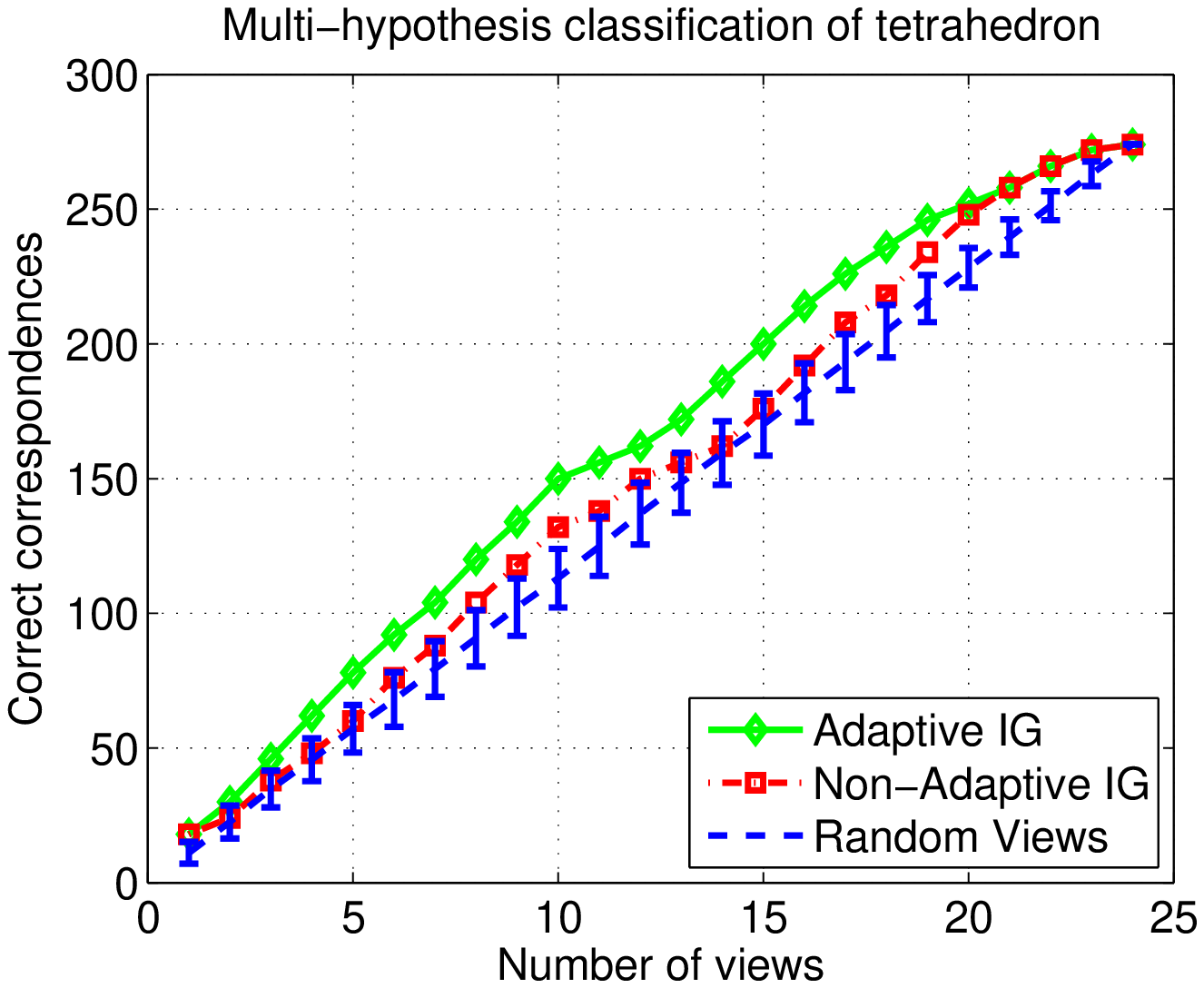}}
  \caption{Classification experiments with synthetic images of the five platonic solids (best viewed in color). The results for a cube and tetrahedron test object are shown. Adaptively selecting the most informative views based on past information tends to improve classification accuracy, and acting adaptively increases this benefit. Random view results are averaged over 100 random orderings; error bars are one standard deviation.}
  \label{fig:multisynth}
\end{figure}

\subsection{Imaging Sonar Data}

To examine the benefit of active classification on real-world data, we ran experiments on imaging sonar depth maps taken from a ship hull inspection with an underwater vehicle. The goal is to determine whether an explosive has been placed on the ship hull. The explosive appears as a small patch of bright pixels on the imaging sonar depth map. Since the sonar data is not dense enough to provide salient features, we take a simpler approach of using the brightness of the pixels as the feature base. A brightness threshold was learned by minimizing the number of misclassified pixels in labeled data. The performance metric is the total number of pixels correctly classified as part of the explosive device. We utilize this metric because images with a large number of corresponding pixels may provide additional information during post-processing or to a human operator.

A separate test set was held out of the labeled set to determine if the most informative views could be predicted using the learned threshold and expected view quality. There were 100 frames in the training and 75 frames in the test set. The training and test frames were from different trajectories, but with the same background. The frame rate was approximately 2 fps. The information gain in these experiments was calculated based on the expected number of pixels corresponding to the explosive in a given view, which was found using an average of the hand-labeled pixels in the training set images weighted by their distance (using data from a DVL sensor). A squared exponential weighting was used.

Figure~\ref{fig:exp} shows the results of running the information gain approach versus random views. We also compare to the initial (very poor) view ordering from the data as well as two simple ordering methods: sorting the views based on minimum distance to the object and sorting based on the maximum angle of view (see Figure~\ref{fig:mine} for the intuition behind this method). The results show that actively choosing the views with the highest expected information improves classification performance. For example, choosing informative views reduces the number of views for 15 correct pixel identifications by nearly 80\% versus random selection (from 38 views to 8 views).

\begin{figure}[bt]
  \centering
  \subfigure{\includegraphics[width=3.0in]{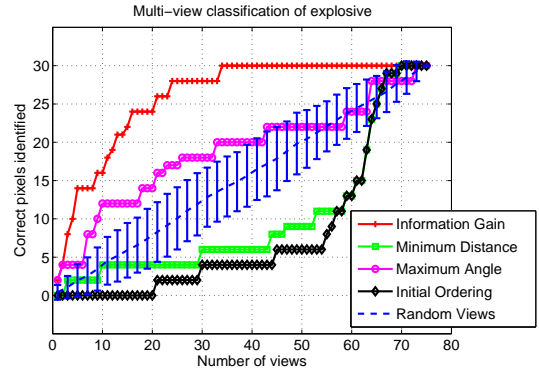}}
  \caption{Multi-view classification experiments with imaging sonar identifying an explosive on a ship hull. With limited views, utilizing information gain leads to a larger number of pixels correctly identified as part of the object of interest. Random view results are averaged over 100 random orderings; error bars are one standard deviation.}
  \label{fig:exp}
\end{figure}

For visual reference, Figure~\ref{fig:imaging} shows images of decreasing expected pixel classifications. Intuitively, the images where the explosive stands out from the background should provide the most information. Despite some incorrect predictions, it is clearly beneficial to examine those viewpoints predicted to be informative. It should be noted that the informativeness of the images depends on the quality of the low-level sonar processing. With perfect low-level data processing, all images may have high informativeness, which would reduce the benefit of active classification.


\begin{figure}[bt]
  \centering
  \subfigure[Exp. gain: 3.7]{\includegraphics[width=1.1in]{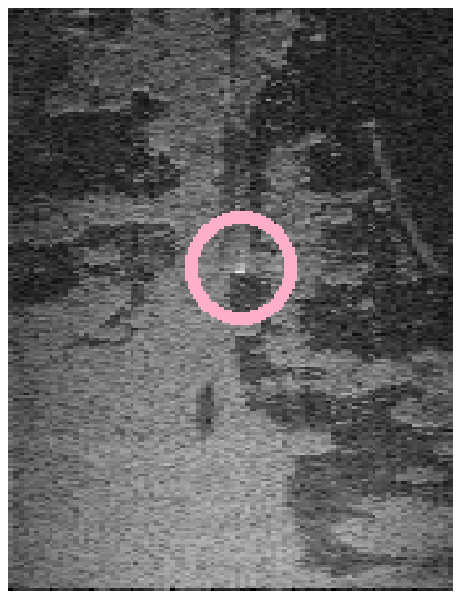}}
  \subfigure[Exp. gain: 2.1]{\includegraphics[width=1.1in]{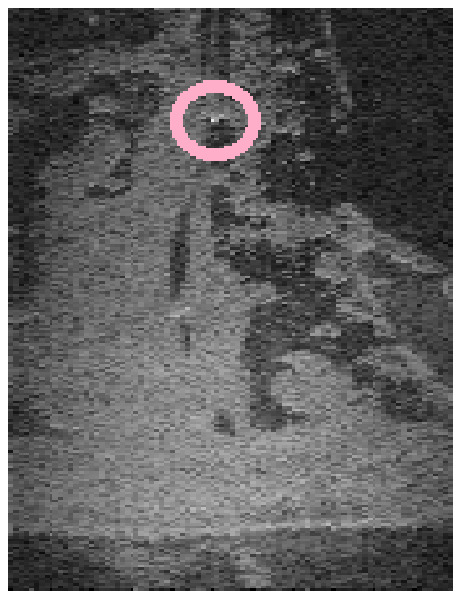}}
  \subfigure[Exp. gain: 1.5]{\includegraphics[width=1.1in]{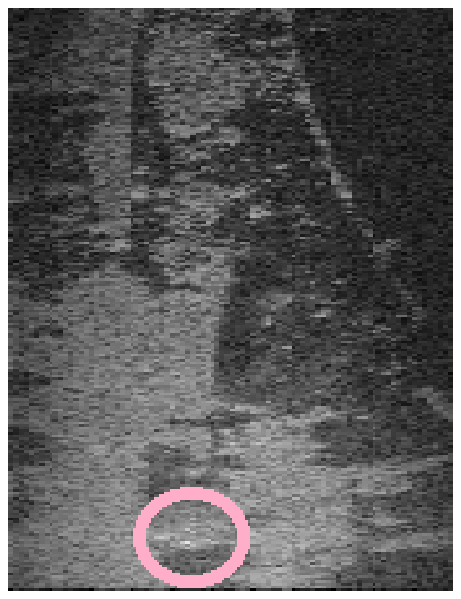}}
  \subfigure[Exp. gain: 0.8]{\includegraphics[width=1.1in]{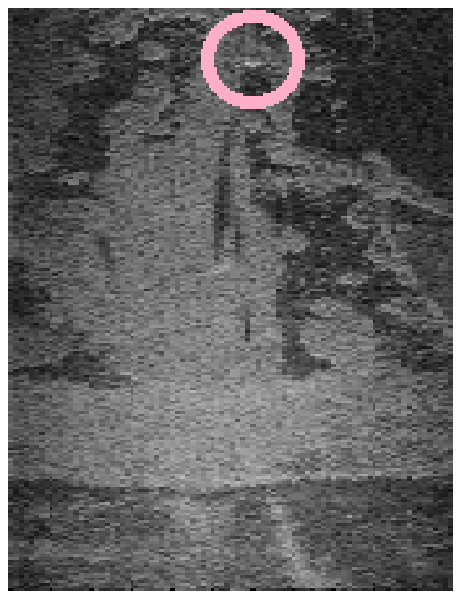}}
  \caption{Imaging sonar depth maps of an explosive device (circled) placed on a ship's hull. The depth maps are ordered based on the expected number of pixels in the image corresponding to a possible explosive. Note that the explosive is easy to identify in image (a), more difficult to identify in image (b), and very difficult to identify in image (c). Image (d) is expected to be a low information view, when in fact the explosive is relatively easy to identify.}
  \label{fig:imaging}
\end{figure}

\section{Conclusions and Future Work}
\label{sect:conc}

This paper has shown that actively choosing informed views improves performance for inspection tasks in the example underwater domain. The experimental results demonstrate that depth map information can be utilized to recognize objects of interest, and that (compared to passive methods) up to 80\% fewer views need to be examined if the views are chosen based on their expected information content. In addition, acting adaptively by re-evaluating the most informed views as new information becomes available leads to improvement when more than two classes are considered. These results are consistent with theoretical analysis of the benefit of adaptivity.

Future work includes further theoretical analysis of possible performance guarantees, particularly in the case of path constraints. In addition, the results in this paper utilize features for classification. Recent work in featureless classification through the use of point clouds would benefit from active classification methods as well. Finally, the analysis in this paper has applications beyond underwater inspection. Tasks such as ecological monitoring, reconnaissance, and surveillance are just a few domains that would benefit from active planning for the most informed views. Through better control of the information we receive, we can improve the understanding of the world that we gain from robotic perception.

\begin{acknowledgement}
The authors gratefully acknowledge Franz Hover and Brendan Englot at MIT for imaging sonar data and technical support while processing the data. Further thanks to Hördur Heidarsson at USC for assistance with data collection.
\end{acknowledgement}

\bibliographystyle{plainnat}
\bibliography{references}

\begin{thebibliography}{21}
\providecommand{\natexlab}[1]{#1}
\providecommand{\url}[1]{\texttt{#1}}
\expandafter\ifx\csname urlstyle\endcsname\relax
  \providecommand{\doi}[1]{doi: #1}\else
  \providecommand{\doi}{doi: \begingroup \urlstyle{rm}\Url}\fi

\bibitem[Bay et~al.(2008)Bay, Ess, Tuytelaars, and Gool]{bay08CVIU}
H.~Bay, A.~Ess, T.~Tuytelaars, and L.~Gool.
\newblock {SURF}: Speeded up robust features.
\newblock \emph{Computer Vision and Image Understanding}, 110\penalty0
  (3):\penalty0 346--359, 2008.

\bibitem[Bradski and Kaehler(2008)]{opencv}
G.~Bradski and A.~Kaehler.
\newblock \emph{Learning {OpenCV}: Computer Vision with the {OpenCV} Library}.
\newblock O'Reilly Media, 2008.

\bibitem[Cameron and Durrant-Whyte(1990)]{cameron90IJRR}
A.~Cameron and H.~Durrant-Whyte.
\newblock A {B}ayesian approach to optimal sensor placement.
\newblock \emph{Int. J. of Robotics Research}, 9\penalty0 (5):\penalty0 70--88,
  1990.

\bibitem[Dean et~al.(2008)Dean, Goemans, and Vondrak]{dean08MOR}
B.~Dean, M.~Goemans, and J.~Vondrak.
\newblock Approximating the stochastic knapsack: the benefit of adaptivity.
\newblock \emph{Mathematics of Operations Research}, 33\penalty0 (4):\penalty0
  945--964, 2008.

\bibitem[Denzler and Brown(2002)]{denzler02PAMI}
J.~Denzler and C.~Brown.
\newblock Information theoretic sensor data selection for active object
  recognition and state estimation.
\newblock \emph{IEEE Trans. Pattern Analysis and Machine Intelligence},
  24\penalty0 (2):\penalty0 145--–157, 2002.

\bibitem[Dobeck and Cobb(2002)]{dobeck02SPIE}
G.~Dobeck and J.~Cobb.
\newblock Fusion of multiple quadratic penalty function support vector machines
  ({QPFSVM}) for automated sea mine detection and classification.
\newblock In \emph{Proc. SPIE}, 2002.

\bibitem[Golovin and Krause(2010)]{golovin10COLT}
D.~Golovin and A.~Krause.
\newblock Adaptive submodularity: A new approach to active learning with
  stochastic optimization.
\newblock In \emph{Proc. Conf. Learning Theory}, 2010.

\bibitem[Golovin et~al.(2010)Golovin, Ray, and Krause]{golovin10NIPS}
D.~Golovin, D.~Ray, and A.~Krause.
\newblock Near-optimal {B}ayesian active learning with noisy observations.
\newblock In \emph{Proc. Neural Information Processing Systems}, 2010.

\bibitem[Hollinger et~al.(2009)Hollinger, Singh, Djugash, and
  Kehagias]{hollinger09IJRR}
G.~Hollinger, S.~Singh, J.~Djugash, and A.~Kehagias.
\newblock Efficient multi-robot search for a moving target.
\newblock \emph{Int. J. Robotics Research}, 28\penalty0 (2):\penalty0 201--219,
  2009.

\bibitem[Krause and Guestrin(2005)]{krause05UAI}
A.~Krause and C.~Guestrin.
\newblock Near-optimal nonmyopic value of information in graphical models.
\newblock In \emph{Proc. Uncertainty in Artificial Intelligence}, 2005.

\bibitem[Ma and Burdick(2010)]{ma10ICRA}
J.~Ma and J.~Burdick.
\newblock Dynamic sensor planning with stereo for model identification on a
  mobile platform.
\newblock In \emph{Proc. IEEE Conf. Robotics and Automation}, 2010.

\bibitem[Naghshvar and Javidi(2010)]{naghshvar10ISIT}
M.~Naghshvar and T.~Javidi.
\newblock Active {M}-ary sequential hypothesis testing.
\newblock In \emph{Proc. IEEE Int. Symp. Information Theory}, 2010.

\bibitem[Schiele and Crowley(1998)]{schiele98ICCV}
B.~Schiele and J.~Crowley.
\newblock Transinformation for active object recognition.
\newblock In \emph{Proc. Int. Conf. Computer Vision}, 1998.

\bibitem[Singh et~al.(2009)Singh, Krause, Guestrin, and Kaiser]{singh09JAIR}
A.~Singh, A.~Krause, C.~Guestrin, and W.~Kaiser.
\newblock Efficient informative sensing using multiple robots.
\newblock \emph{J. Artificial Intelligence Research}, 34:\penalty0 707--755,
  2009.

\bibitem[Sipe and Casasent(2002)]{sipe02PAMI}
M.~Sipe and D.~Casasent.
\newblock Feature space trajectory methods for active computer vision.
\newblock \emph{IEEE Trans. Pattern Analysis and Machine Learning}, 24\penalty0
  (12):\penalty0 1634--1643, 2002.

\bibitem[Steinberg et~al.(2010)Steinberg, Williams, Pizarro, and
  Jakuba]{steinberg10IROS}
D.~Steinberg, S.~Williams, O.~Pizarro, and M.~Jakuba.
\newblock Towards autonomous habitat classification using {G}aussian mixture
  models.
\newblock In \emph{Proc. IEEE/RSJ Int. Conf. Intelligent Robots and Systems},
  2010.

\bibitem[Thrun et~al.(2005)Thrun, Burgard, and Fox]{thrunbook}
S.~Thrun, W.~Burgard, and D.~Fox.
\newblock \emph{Probabilistic Robotics}.
\newblock MIT Press, Cambridge, MA, 2005.

\bibitem[Wald(1945)]{wald45AMS}
A.~Wald.
\newblock Sequential tests of statistical hypotheses.
\newblock \emph{Ann. Mathematical Statistics}, 16\penalty0 (2):\penalty0
  117--186, 1945.

\bibitem[Williams(2009)]{williams09TIP}
D.~Williams.
\newblock Bayesian data fusion of multiview synthetic aperture sonar imagery
  for seabed classification.
\newblock \emph{IEEE Trans. Image Processing}, 18\penalty0 (6):\penalty0
  1239--1254, 2009.

\bibitem[Williams(2010)]{williams10ICRA}
D.~Williams.
\newblock On optimal {AUV} track-spacing for underwater mine detection.
\newblock In \emph{Proc. IEEE Int. Conf. Robotics and Automation}, 2010.

\bibitem[Zhou et~al.(2003)Zhou, Comaniciu, and Krishnan]{zhou03ICCV}
X.~Zhou, D.~Comaniciu, and A.~Krishnan.
\newblock Conditional feature sensitivity: A unifying view on active
  recognition and feature selection.
\newblock In \emph{Proc. Int. Conf. Computer Vision}, 2003.

\end{thebibliography}

\end{document}